\newcommand{\cmark}{\ding{51}}%
\newcommand{\xmark}{\ding{55}}%
\newcommand{\Reals}{\mathbb{R}}
\newcommand{\EE}{\mathbb{E}}
\DeclarePairedDelimiter\abs{\lvert}{\rvert}
\DeclarePairedDelimiter{\ceil}{\lceil}{\rceil}
\newcommand{\dist}{\ \sim\ }
\newcommand{\distiid}{\overset{\mathrm{iid}}{\dist}}
\DeclareMathOperator{\KL}{KL}
\renewcommand{\P}{\mathbb{P}}
\global\long\def\iid{i.i.d.\ }
\global\long\def\normDist{\mathrm{Normal}}
\global\long\def\distiid{\overset{iid}{\dist}}
\global\long\def\Reals{\mathbb{R}}
\DeclarePairedDelimiterX\Set[1]{\lbrace}{\rbrace}%
{  #1 }
\crefname{lemma}{Lemma}{Lemmas}
\crefname{cor}{Corollary}{Corollaries}
\crefname{theorem}{Theorem}{Theorems}
\crefname{assumption}{Assumption}{Assumptions}
\declaretheorem[style=plain,numberwithin=section,name=Theorem]{theorem}
\declaretheorem[style=remark,sibling=theorem,name=Remark]{remark}
\title{Non-vacuous Generalization Bounds at the ImageNet Scale: a PAC-Bayesian Compression Approach}
\author{
    Wenda Zhou \\
    Columbia University \\
    New York, NY \\
    \texttt{wz2335@columbia.edu} \\
    \And
    Victor Veitch \\
    Columbia University \\
    New York, NY \\
    \texttt{victorveitch@gmail.com} \\
    \And
    Morgane Austern \\
    Columbia University \\
    New York, NY \\
    \texttt{ma3293@columbia.edu} \\
    \And
    Ryan P. Adams \\
    Princeton University \\
    Princeton, NJ \\
    \texttt{rpa@princeton.edu} \\
    \And
    Peter Orbanz \\
    Columbia University \\
    New York, NY \\
    \texttt{porbanz@stat.columbia.edu} \\
}
\begin{document}
\maketitle

\begin{abstract}
  Modern neural networks are highly overparameterized, with capacity to
  substantially overfit to training data. Nevertheless, these networks often
  generalize well in practice. It has also been observed that
  trained networks can often be ``compressed'' to much smaller representations.
  The purpose of this paper is to connect these two empirical observations.
  Our main technical result is a generalization bound for compressed
  networks based on the compressed size that,
  combined with off-the-shelf compression algorithms, leads to state-of-the-art generalization guarantees.
  In particular, we provide the first non-vacuous generalization guarantees
  for realistic architectures applied to the ImageNet classification problem.
  Additionally, we show
  that compressibility of models that tend to overfit is limited.
  Empirical results show that an increase
  in overfitting increases the number of bits required to
  describe a trained network.
\end{abstract}

\section{Introduction}
\label{sec:introduction}

A pivotal question in machine learning is why deep networks perform well despite overparameterization. 
These models often have many
more parameters than the number of examples they
are trained on, which enables them to drastically overfit to training data \citep{Zhang:Bengio:Hardt:Recht:Vinyals:2017}. %
In common practice, however, such networks perform well on previously unseen data.

Explaining the generalization performance of neural networks is an active area of current research.
Attempts have been made at adapting classical measures such as VC-dimension \citep{Harvey:Liaw:Mehrabian:2017}
or margin/norm bounds \citep{Neyshabur:Bhojanapalli:Srebro:2018,Bartlett:Foster:Telgarsky:2017}, 
but such approaches have yielded bounds that are vacuous by orders of magnitudes.
Other authors have explored modifications of the training procedure
to obtain networks with provable generalization guarantees \citep{Dziugaite:Roy:2017,Dziugaite:Roy:2018}.
Such procedures often differ substantially from standard procedures used by practitioners,
and empirical evidence suggests that they fail to improve performance in practice \citep{Wilson:Roelofs:Stern:Srebro:Recht:2017}.

We begin with an empirical observation: it is often possible to ``compress'' trained neural networks by finding
essentially equivalent models that can be described in a much smaller number of bits; see
\citet{Cheng:Wang:Zhou:Zhang:2018} for a survey. 
Inspired by classical results relating small model size and generalization performance
(often known as \emph{Occam's razor}),
we establish a new generalization bound based on the effective compressed size of a trained neural network.
Combining this bound with off-the-shelf compression schemes yields the first non-vacuous generalization bounds in practical problems.
The main contribution of the present paper is the demonstration that, unlike many other measures,
this measure is effective in the deep-learning regime.

Generalization bound arguments typically identify some notion of complexity of a learning problem,
and bound generalization error in terms of that complexity. 
Conceptually, the notion of complexity we identify is:
\begin{equation}
    \mathrm{complexity} = \text{compressed size} - \text{remaining structure}.
\end{equation}
The first term on the right-hand side represents the link between
generalization and explicit compression. 
The second term corrects for superfluous structure that remains in the compressed representation.
For instance, the predictions of trained neural networks are often
robust to perturbations of the network weights. 
Thus, a representation of a neural network by its weights carries some irrelevant information.
We show that accounting for this robustness can substantially reduce effective complexity.

Our results allow us to derive explicit generalization guarantees using off-the-shelf neural network compression schemes. 
In particular:
\begin{itemize}[leftmargin=.25in]
\item The generalization bound can be evaluated by compressing a
  trained network, measuring the effective compressed size, %
  and substituting this value into the bound.
\item Using off-the-shelf neural network
  compression schemes with this recipe yields bounds that are state-of-the-art, including the first non-vacuous bounds for modern convolutional neural nets.
\end{itemize}

The above result takes a compression algorithm and outputs a generalization bound on nets compressed by that algorithm.
We provide a complementary result by showing that if a model tends to overfit then there is an absolute
limit on how much it can be compressed. 
We consider a classifier as a (measurable) function of a random training set, so the classifier is viewed as a random variable.
We show that the entropy of this random variable is lower bounded by a function of the expected degree of overfitting. 
Additionally, we use the randomization tests of \citet{Zhang:Bengio:Hardt:Recht:Vinyals:2017} to show empirically that
increased overfitting implies worse compressibility, for a fixed compression scheme.

The relationship between small model size and generalization is hardly new: the idea is a variant of Occam's
razor, and has been used explicitly in classical generalization theory
\citep{rissanen:1986, Blumer:Ehrenfeucht:Haussler:Warmuth:1986,
  MacKay:1992, Hinton:Camp:1993, Rasmussen:Ghahramani:2001}. 
However, the use of highly overparameterized models in deep learning
seems to directly contradict the Occam principle. Indeed, the study of
generalization and the study of compression in 
deep learning has been largely disjoint; the later has been primarily
motivated by computational and storage limitations, such as those arising from applications on
mobile devices \citep{Cheng:Wang:Zhou:Zhang:2018}.
Our results show that Occam type arguments remain powerful in the deep
learning regime. 
The link between compression and generalization is also used in
work by \citet{Arora:Ge:Neyshabur:Zhang:2018}, 
who study compressibility arising from a form of noise stability. 
Our results are substantially different, and closer
in spirit to the work of \citet{Dziugaite:Roy:2017}; see
\cref{sec:prev_work} for a detailed discussion.

\citet{Zhang:Bengio:Hardt:Recht:Vinyals:2017} study the problem of generalization in deep learning empirically.
They observe that standard deep net architectures---which generalize well on real-world data---are able to
achieve perfect training accuracy on randomly labelled data.  Of course, in this case the test 
error is no better than random guessing. 
Accordingly, any approach to controlling generalization error of deep nets must selectively and preferentially bound 
the generalization error of models that are actually plausible outputs of the training procedure applied to
real-world data.
Following \citet{Langford:Caruana:2002, Dziugaite:Roy:2017, Neyshabur:Bhojanapalli:Srebro:2018}, we make use of the
PAC-Bayesian framework \citep{McAllester:1999, Catoni:2007, McAllester:2013}. This framework allows us
to encode prior beliefs about which learned models are plausible as a (prior) distribution
$\pi$ over possible parameter settings. 
The main challenge in developing a bound in the PAC-Bayes framework bound is to articulate a distribution
$\pi$ that encodes the relative plausibilities of possible outputs of the training procedure. 
The key insight is that, implicitly, any compression scheme is a statement about model plausibilities: 
good compression is achieved by assigning short codes to the most probable models, and so the probable models are those with short codes.

\section{Generalization and the PAC-Bayesian Principle}
\label{sec:background}
\newcommand{\hSpace}{\mathcal{H}}
\newcommand{\domSpace}{\mathcal{X}}
\newcommand{\labSpace}{\mathcal{Y}}
\newcommand{\dataGen}{\mathcal{D}}
\newcommand{\clen}[1]{\abs{#1}_c}

In this section, we recall some background and notation from statistical learning theory.
Our aim is to learn a classifier using data examples.
Each example~$(x,y)$ consists of some features~${x \in \domSpace}$
and a label~${y \in \labSpace}$. 
It is assumed that the data are drawn identically and independently from some data generating distribution,~${(X_i, Y_i) \distiid \dataGen}$. 
The goal of learning is to choose a hypothesis~${h: \domSpace \to \labSpace}$ that predicts the label from the features.
The quality of the prediction is measured by specifying some loss function~$L$; the value~$L(h(x),y)$ is a measure of the failure of
hypothesis~$h$ to explain example~$(x,y)$. 
The overall quality of a hypothesis~$h$ is measured by the risk under the data generating distribution:
\[
L(h) = \EE_{(X,Y) \sim \dataGen}[L(h(X), Y)]\,.
\]
Generally, the data generating distribution is unknown.
Instead, we assume access to training data~${S_n = \{(x_1, y_1), \dots, (x_n, y_n)\}}$,
a sample of~$n$ points drawn \iid from the data generating distribution.
The true risk is estimated by the empirical risk:
\[
\hat{L}(h) = {\textstyle \frac{1}{n} \sum_{(x,y)\in S}L(h(x), y)}\,.
\]
The task of the learner is to use the training data to choose a hypothesis~$\hat{h}$ from among some
pre-specified set of possible hypothesis~$\hSpace$, the hypothesis class. 
The standard approach to learning is to choose a hypothesis~$\hat{h}$ that (approximately) minimizes the empirical risk.
This induces a dependency between the choice of hypothesis and the estimate of the hypothesis' quality.
Because of this, it can happen that~$\hat{h}$ overfits to the training data:~${\hat{L}(\hat{h}) \ll L(\hat{h})}$. 
The generalization error~${L(\hat{h}) - \hat{L}(\hat{h})}$ measures the degree of overfitting.
In this paper, we consider an image classification problem, where~$x_i$ is an image and~$y_i$ the associated
label for that image. The selected hypothesis is a deep neural network. We mostly consider the~$0\text{ -}1$ loss,
that is,~${L(h(x), y) = 0}$ if the prediction is correct and~${L(h(x), y) = 1}$ otherwise.

We use the PAC-Bayesian framework to establish bounds on generalization error. In general, a PAC-Bayesian
bound attempts to control the generalization error of a stochastic classifier by measuring the discrepancy
between a pre-specified random classifier (often called \emph{prior}), and the classifier
of interest. Conceptually, PAC-Bayes bounds have the form:
\begin{equation}
    \text{generalization error of $\rho$} \leq O\left(\sqrt{\KL (\rho, \pi) / n}\right),
\end{equation}
where $n$ is the number of training examples, $\pi$ denotes the prior, and $\rho$ denotes
the classifier of interest (often called \emph{posterior}).

More formally, we write~${L(\rho) = \EE_{h\sim\rho}[L(h)]}$ for the risk of the random estimator. 
The fundamental bound in PAC-Bayesian theory is
\citep[Thm.~1.2.6]{Catoni:2007}:
\begin{theorem}[PAC-Bayes]
    \label{thm:pac-bayes}
    Let $L$ be a $\{0,1\}$-valued loss function, let $\pi$ be some probability measure on the hypothesis class, and 
    let $\alpha > 1, \epsilon > 0$. Then, with probability at least~${1 - \epsilon}$ over the distribution
    of the sample:
    \begin{equation}
        L(\rho) \leq \inf_{\lambda > 1} \Phi^{-1}_{\lambda / n} \left\{
            \hat{L}(\rho) + \frac{\alpha}{\lambda}\left[ \KL(\rho, \pi) - \log \epsilon
            + 2 \log \left( \frac{\log(\alpha^2 \lambda)}{\log \alpha} \right) \right]
            \right\}\,,
    \end{equation}
    where we define $\Phi^{-1}_\gamma$ as:
    \begin{equation}
        \Phi^{-1}_\gamma (x) = \frac{1 - e^{-\gamma x}}{1 - e^{-\gamma}}\,.
    \end{equation}
\end{theorem}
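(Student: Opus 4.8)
The plan is to follow the classical ``exponential moment plus change of measure'' recipe underlying Catoni-type PAC-Bayes bounds, and then to upgrade a fixed-$\lambda$ bound to the stated infimum over $\lambda$ by a union bound over a geometric grid. Throughout I write $\gamma = \lambda/n$ and let $\Phi_\gamma$ denote the inverse of the stated map $\Phi^{-1}_\gamma$, so that $\Phi_\gamma(y) = -\tfrac{1}{\gamma}\log\bigl(1 - y(1-e^{-\gamma})\bigr)$; note that $\Phi^{-1}_\gamma$ is strictly increasing in its argument, which lets me invert inequalities at the end.

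The first step is a single-hypothesis exponential moment computation. Fix a hypothesis $h$ and write $p = L(h)$. Because $L$ is $\{0,1\}$-valued, the per-example losses $L(h(X_i),Y_i)$ are independent and identically distributed, each $\mathrm{Bernoulli}(p)$, so the moment generating function factorizes:
\[
\EE_S\bigl[e^{-\lambda \hat{L}(h)}\bigr] = \prod_{i=1}^n \EE\bigl[e^{-\gamma L(h(X_i),Y_i)}\bigr] = \bigl(1 - p(1 - e^{-\gamma})\bigr)^{n}.
\]
By the definition of $\Phi_\gamma$ one has $e^{\lambda \Phi_\gamma(p)} = \bigl(1 - p(1-e^{-\gamma})\bigr)^{-n}$, and multiplying the two expressions gives the clean identity
\[
\EE_S\Bigl[\exp\bigl(\lambda \Phi_\gamma(L(h)) - \lambda \hat{L}(h)\bigr)\Bigr] = 1 .
\]
This is precisely where the function $\Phi_\gamma$ enters: it is the transform that turns the Bernoulli cumulant generating function into a centered exponential with unit mean. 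I also record that $\Phi_\gamma$ is convex on $[0,1]$, as a direct computation of its second derivative shows; this is needed below.

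Next comes the change of measure. Integrating the identity against the prior $\pi$ and applying Fubini shows that $Z \defas \int \exp\bigl(\lambda\Phi_\gamma(L(h)) - \lambda\hat{L}(h)\bigr)\,\pi(dh)$ has $\EE_S[Z] = 1$, so Markov's inequality gives $Z \le 1/\epsilon$ with probability at least $1-\epsilon$. On this event, the Donsker--Varadhan variational formula applied with the posterior $\rho$ yields, for every $\rho$,
\[
\lambda \,\EE_{h\sim\rho}[\Phi_\gamma(L(h))] - \lambda \hat{L}(\rho) \le \KL(\rho,\pi) + \log Z \le \KL(\rho,\pi) - \log\epsilon .
\]
Dividing by $\lambda$, using convexity of $\Phi_\gamma$ and Jensen's inequality to replace $\EE_\rho[\Phi_\gamma(L(h))]$ by $\Phi_\gamma(L(\rho))$, and finally applying the increasing map $\Phi^{-1}_\gamma$ produces the fixed-$\lambda$ bound
\[
L(\rho) \le \Phi^{-1}_{\lambda/n}\Bigl(\hat{L}(\rho) + \tfrac{1}{\lambda}\bigl[\KL(\rho,\pi) - \log\epsilon\bigr]\Bigr),
\]
valid with probability at least $1-\epsilon$ for each fixed $\lambda$.

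The remaining, and most delicate, step is to make the bound hold simultaneously for all $\lambda > 1$ so that the infimum may be taken. I would apply the fixed-$\lambda$ bound on the geometric grid $\lambda_k = \alpha^{k}$, $k \ge 1$, each at confidence level $\epsilon_k = \epsilon\,(k+2)^{-2}$; since $\sum_{k\ge 1}(k+2)^{-2} < 1$, a union bound keeps the total failure probability below $\epsilon$. Writing $-\log\epsilon_k = -\log\epsilon + 2\log(k+2)$ and using $k+2 = \log(\alpha^2\lambda_k)/\log\alpha$ reproduces exactly the $2\log\bigl(\log(\alpha^2\lambda)/\log\alpha\bigr)$ penalty. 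Finally, for an arbitrary $\lambda > 1$ one selects the grid point $\lambda_k \in (\lambda/\alpha,\lambda]$: the factor $\alpha$ in front of the bracket absorbs the discretization loss $\tfrac{1}{\lambda_k} \le \tfrac{\alpha}{\lambda}$, while monotonicity of $\Phi^{-1}_\gamma$ in both its argument and in $\gamma$ controls the passage from $\gamma_k$ to $\gamma$, yielding the bound at the continuous $\lambda$ and hence the infimum. The main obstacle is precisely this discretization bookkeeping: choosing the grid spacing and the summable weights $\epsilon_k$ so that the per-$\lambda$ correction is only doubly logarithmic while the single extra factor $\alpha$ exactly covers the gap between the grid and the continuum.
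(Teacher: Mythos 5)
The paper does not actually prove this theorem: it is imported by citation from \citet{Catoni:2007} (Thm.~1.2.6), so there is no internal proof to compare against. Your reconstruction is the standard Catoni argument and is essentially correct: the exact identity $\EE_S\bigl[\exp\bigl(\lambda \Phi_{\gamma}(L(h)) - \lambda \hat{L}(h)\bigr)\bigr] = 1$ for $\{0,1\}$-valued losses, Fubini plus Markov to control the prior integral, Donsker--Varadhan for the change of measure, Jensen via convexity of $\Phi_\gamma$, and a weighted union bound over the geometric grid with weights $\epsilon(k+2)^{-2}$ that reproduces the $2\log\bigl(\log(\alpha^2\lambda)/\log\alpha\bigr)$ penalty exactly. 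Two small points to tighten. First, an indexing slip: with the grid starting at $k=1$ (so $\lambda_1=\alpha$), an arbitrary $\lambda\in(1,\alpha)$ has no grid point in $(\lambda/\alpha,\lambda]$; you should include $k=0$, i.e.\ $\lambda_0=1$, which still has $\sum_{k\ge 0}(k+2)^{-2}<1$ and gives $k+2=2=\log(\alpha^2\lambda_0)/\log\alpha$, after which your discretization argument ($1/\lambda_0\le\alpha/\lambda$ and monotonicity in $\gamma$) covers all $\lambda>1$. Second, the monotonicity of $\Phi^{-1}_\gamma$ in $\gamma$ that you invoke holds for arguments in $[0,1]$ but reverses for arguments above $1$; this is harmless because $\Phi^{-1}_\gamma(x)\ge 1\ge L(\rho)$ whenever $x\ge 1$, so the bound is vacuously true there, but the case split should be stated. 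Neither issue is a gap in the underlying idea; both are bookkeeping.
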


\begin{remark}
    The above formulation of the PAC-Bayesian theorem is somewhat more opaque than
    other formulations \citep[e.g.,][]{McAllester:2003,McAllester:2013,Neyshabur:Bhojanapalli:Srebro:2018}.
    This form is significantly tighter when $\mathrm{KL} / n$ is large.
    See \cite{Begin:2014,Laviolette:2017} for a unified treatment of PAC-Bayesian bounds.
\end{remark}

The quality of a PAC-Bayes bound depends on the discrepancy between the PAC-Bayes prior $\pi$---encoding the learned models we think are plausible---and~$\rho$,
which is the actual output of the learning procedure. 
The main challenge is finding good choices for the PAC-Bayes prior $\pi$, for which the value
of $\KL(\rho, \pi)$ is both small and computable.

\section{Relationship to Previous Work}
\label{sec:prev_work}

\paragraph{Generalization.}
The question of which properties of real-world networks explain good generalization behavior has
attracted considerable attention \citep{Langford:2002, Langford:Caruana:2002,
Hinton:Camp:1993, Hochreiter:Schmidhuber:1997, Baldassi:Ingrosso:Lucibello:Saglietti:Zecchina:2015,
Baldassi:Borgs:Chayes:Ingrosso:Lucibello:Saglietti:Zecchina:2016,
Chaudhari:Choromanska:Soatto:LeCun:Baldassi:Borgs:Chayes:Sagun:Zecchina:2017,
Keskar:Mudigere:Nocedal:Smelyanskiy:Tang:2016, Dziugaite:Roy:2017, Schmidt-Hieber:2017, Neyshabur:Bhojanapalli:Mcallester:Srebro:2017, Neyshabur:Bhojanapalli:Srebro:2018, Arora:Ge:Neyshabur:Zhang:2018}; see
\citet{Arora:2017:gen_1} for a review of recent advances. Such results typically 
identify a property of real-world networks, formalize it as a mathematical definition, and then
use this definition to prove a generalization bound. Generally, the bounds are very loose
relative to the true generalization error, which can be estimated by evaluating performance 
on held-out data. Their purpose is not to quantify the actual generalization error, but rather
to give qualitative evidence that the property underpinning the generalization bound is
indeed relevant to generalization performance. The present paper can be seen in this tradition: we
propose compressibility as a key signature of performant real-world deep nets, and we give qualitative evidence for
this thesis in the form of a generalization bound.

The idea that compressibility leads to generalization has a long history in machine learning.
Minimum description length (MDL) is an early formalization of the idea \citep{rissanen:1986}.
\citet{Hinton:Camp:1993} applied MDL to very small networks, already recognizing the importance of
weight quantization and stochasticity.
More recently, \citet{Arora:Ge:Neyshabur:Zhang:2018} consider the connection between compression and generalization in large-scale deep learning.
The main idea is to compute a measure of noise-stability of the network,
and show that it implies the existence of a simpler network with nearly the same performance.
A variant of a known compression bound (see \citep{McAllester:2013} for a PAC-Bayesian formulation)
is then applied to bound the generalization error of this simpler network in terms of its code length.
In contrast, the present paper develops a tool to leverage existing neural network compression algorithms to obtain strong generalization bounds.
The two papers are complementary: we establish non-vacuous bounds, and hence establish a quantitative connection between generalization and compression.
An important contribution of \citet{Arora:Ge:Neyshabur:Zhang:2018} is obtaining a quantity measuring the compressibility
of a neural network; in contrast, we apply a compression algorithm and witness its performance.
We note that their compression scheme is very different from
the sparsity-inducing compression schemes \citep{Cheng:Wang:Zhou:Zhang:2018} we use in our experiments.
Which properties of deep networks allow them to be sparsely compressed remains an open question.

To strengthen a na\"ive Occam bound, we use the idea that
deep networks are insensitive to mild perturbations of their weights, and that this
insensitivity leads to good generalization behavior. This concept has also been 
widely studied \citep[e.g.,][]{Langford:2002, Langford:Caruana:2002,
Hinton:Camp:1993, Hochreiter:Schmidhuber:1997, Baldassi:Ingrosso:Lucibello:Saglietti:Zecchina:2015,
Baldassi:Borgs:Chayes:Ingrosso:Lucibello:Saglietti:Zecchina:2016,
Chaudhari:Choromanska:Soatto:LeCun:Baldassi:Borgs:Chayes:Sagun:Zecchina:2017,
Keskar:Mudigere:Nocedal:Smelyanskiy:Tang:2016, Dziugaite:Roy:2017, Neyshabur:Bhojanapalli:Srebro:2018}.
As we do, some of these papers use a PAC-Bayes approach \citep{Langford:Caruana:2002, Dziugaite:Roy:2017,
Neyshabur:Bhojanapalli:Srebro:2018}. \citet{Neyshabur:Bhojanapalli:Srebro:2018} arrive at a bound for non-random
classifiers by computing the tolerance of a given deep net to noise, and bounding the difference between that
net and a stochastic net to which they apply a PAC-Bayes bound.
Like the present paper, \citet{Langford:Caruana:2002, Dziugaite:Roy:2017} work with a random
classifier given by considering a normal distribution over the weights centered at the output of the training
procedure. 
We borrow the observation of \citet{Dziugaite:Roy:2017} that the stochastic network is a convenient formalization
of perturbation robustness. %

The approaches to generalization most closely related to ours are, in summary:
\begin{center}
    \begin{tabular}{llcc}
      \multirow{2}{*}{Reference} & \multirow{2}{*}{Structure} & \multicolumn{2}{c}{Non-Vacuous} \\
      && MNIST & ImageNet\\
      \midrule
      \cite{Dziugaite:Roy:2017} & Perturbation Robustness & \cmark & \xmark \\
      \cite{Neyshabur:Bhojanapalli:Srebro:2018} & Perturbation Robustness & \xmark & \xmark \\
      \cite{Arora:Ge:Neyshabur:Zhang:2018} & Compressibility (from Perturbation Robustness) & \xmark & \xmark \\
      Present paper & Compressibility and Perturbation Robustness & \cmark & \cmark \\[1em]
  \end{tabular}
\end{center}
These represent the best known generalization guarantees for deep neural
networks. Our bound provides the first non-vacuous generalization guarantee for the ImageNet classification
task, the \emph{de facto} standard problem for which deep learning
dominates. It is also largely agnostic to model architecture: 
we apply the same argument to both fully connected and convolutional
networks. This is in contrast to some existing approaches that
require extra analysis to extend bounds for fully connected networks to bounds for convolutional networks
\citep{Neyshabur:Bhojanapalli:Srebro:2018,Konstantinos:Davies:Vandergheynst:2017,Arora:Ge:Neyshabur:Zhang:2018}.

\paragraph{Compression.}
The effectiveness of our work relies on the existence of good neural network compression algorithms.
Neural network compression has been the subject of extensive interest in the last few years, motivated by engineering
requirements such as computational or power constraints. We apply a relatively simple strategy in this paper
in the line of \citet{Song:Huizi:Dally:2016}, but we note that our bound is compatible with most forms
of compression. See \citet{Cheng:Wang:Zhou:Zhang:2018} for a survey of recent results
in this field.

\section{Main Result}\label{sec:results}

We first describe a simple Occam's razor type bound that translates the quality of a compression into a
generalization bound for the compressed model. The idea is to choose the PAC-Bayes prior $\pi$ such that greater
probability mass is assigned to models with short code length. In fact, the bound stated in this section may be
obtained as a simple weighted union bound, and a variation is reported in \citet{McAllester:2013}. However, embedding this bound
in the PAC-Bayesian framework allows us to combine this idea, reflecting the explicit compressible structure of trained
networks, with other ideas reflecting different properties of trained networks.

We consider a non-random classifier by taking the PAC-Bayes posterior $\rho$ to be a point mass at
$\hat{h}$, the output of the training (plus compression) procedure.
Recall that computing the PAC-Bayes bound effectively reduces to computing $\KL(\rho,\pi)$.
\begin{theorem}\label{thm:simple_bound}
  Let $\clen{h}$ denote the number of bits required to represent hypothesis $h$ using some pre-specified coding $c$.
  Let $\rho$ denote the point mass at the compressed model $\hat{h}$. Let $m$ denote any probability measure on the positive
  integers.
    There exists a prior $\pi_c$ such that:
    \begin{equation}
       \KL(\rho, \pi_c) \leq \clen{\hat{h}} \log 2 - \log(m(\clen{\hat{h}}))  \,  .
    \end{equation}
\end{theorem}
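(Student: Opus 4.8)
The plan is to exploit the fact that $\rho$ is a point mass, which collapses the KL divergence to a single log-probability, and then to hand-craft $\pi_c$ so that it assigns $\hat{h}$ enough mass. Since $\rho = \delta_{\hat{h}}$ is the point mass at $\hat{h}$, for any discrete prior $\pi_c$ with an atom at $\hat{h}$ the only surviving term in the KL sum is the one at $\hat{h}$, so
\[
    \KL(\rho, \pi_c) = -\log \pi_c(\hat{h}).
\]
The entire problem therefore reduces to constructing a probability measure $\pi_c$ satisfying $\pi_c(\hat{h}) \geq 2^{-\clen{\hat{h}}}\, m(\clen{\hat{h}})$, because taking $-\log$ of this inequality yields exactly the claimed bound.

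To build such a $\pi_c$, I would use a two-stage (hierarchical) construction over the encodable hypotheses. In the first stage, draw a code length $k$ from the given measure $m$ on the positive integers; in the second stage, conditioned on $k$, choose a hypothesis uniformly among all hypotheses whose code under $c$ has length exactly $k$. Writing $n_k$ for the number of such hypotheses, this assigns mass $\pi_c(h) = m(\clen{h})/n_{\clen{h}}$ to each encodable $h$. The key quantitative step is the counting bound $n_k \leq 2^k$: a coding is an injective map into binary strings, and there are only $2^k$ strings of length $k$, so at most $2^k$ hypotheses can share any given code length (the non-prefix form of Kraft's inequality). Since $\hat{h}$ realizes code length $\clen{\hat{h}}$ we have $n_{\clen{\hat{h}}} \geq 1$, and combining this with the counting bound gives
\[
    \pi_c(\hat{h}) = \frac{m(\clen{\hat{h}})}{n_{\clen{\hat{h}}}} \geq \frac{m(\clen{\hat{h}})}{2^{\clen{\hat{h}}}},
\]
which is precisely what the reduction step demands.

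The main thing to be careful about is that $\pi_c$ is a genuine probability measure rather than a sub-probability measure. The two-stage construction places total mass $\sum_{k\,:\,n_k \geq 1} m(k) \leq 1$ on the encodable hypotheses, with a possible deficit coming from lengths $k$ that no hypothesis realizes; I would simply assign any leftover mass to a fixed hypothesis (e.g.\ $\hat{h}$ itself, which only increases $\pi_c(\hat{h})$), so the lower bound above is preserved. I do not anticipate a real obstacle here: the result is essentially a weighted union bound dressed as a PAC-Bayes prior, and the only genuine work is stating the counting argument cleanly and handling this normalization bookkeeping. The one modeling assumption worth flagging explicitly is injectivity of the coding $c$, since that is what licenses the bound $n_k \leq 2^k$.
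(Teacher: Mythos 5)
Your proposal is correct and follows essentially the same route as the paper: a code-length-weighted prior, the reduction $\KL(\delta_{\hat{h}},\pi_c)=-\log\pi_c(\hat{h})$, and injectivity of $c$ as the source of the key counting/normalization inequality. The only cosmetic difference is that you build the prior hierarchically (length $k\sim m$, then uniform over the $n_k\le 2^k$ hypotheses of that length) whereas the paper normalizes $m(\clen{h})2^{-\clen{h}}$ globally and observes $Z\le 1$; the two constructions are interchangeable and yield the same bound, and your version usefully makes explicit the counting step the paper leaves implicit.
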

This result relies only on the quality of the chosen coding and is agnostic to whether a lossy compression is applied
to the model ahead of time.
In practice, the code $c$ is chosen to reflect some explicit structure---e.g., sparsity---that is imposed by a lossy compression.
\begin{proof}
    Let $\hSpace_c \subseteq \hSpace$ denote the set of estimators that correspond to decoded points, and note
    that $\hat{h} \in \hSpace_c$ by construction. Consider the measure $\pi_c$ on $\hSpace_c$: 
    \begin{equation}
        \pi_c(h) = \frac{1}{Z} m(\clen{h}) 2^{-\clen{h}}, \text{ where } Z = \sum_{h \in \hSpace_c} m(\clen{h}) 2^{-\clen{h}}.
    \end{equation}
    As $c$ is injective on $\hSpace_c$, we have that $Z \leq 1$. We may thus directly compute
    the KL-divergence from the definition to obtain the claimed result.
\end{proof}

\begin{remark}
To apply the bound in practice, we must make a choice of $m$.
A pragmatic solution is to simply consider a bound on the size of the
model to be selected (e.g. in many cases it is reasonable to assume that the
encoded model is smaller than $2^{64}$ bytes, which is $2^{72}$ bits), and then
consider $m$ to be uniform over all possible lengths.

\end{remark}

\subsection{Using Robustness to Weight Perturbations} \label{sec:model_struct}
The simple bound above applies to an estimator
that is compressible in the sense that its encoded length with respect to
some fixed code is short. However, such a strategy does not consider
any structure on the hypothesis space $\hSpace$. 
In practice, compression schemes will often fail to exploit some structure,
and generalization bounds can be (substantially) improved by accounting for this fact.
We empirically observe that trained neural networks are often
tolerant to low levels of discretization of the trained weights, and
also tolerant to some low level of added noise in the trained weights.
Additionally, quantization is an essential step in numerous compression
strategies \citep{Song:Huizi:Dally:2016}. 
We construct a PAC-Bayes bound that reflects this structure.

This analysis requires a compression scheme specified in more detail. We
assume that the output of the compression procedure is a triplet $(S, C, Q)$,
where ${S = \{s_1, \dotsc, s_k\} \subseteq \{ 1, \dotsc, p \}}$ denotes the location of the non-zero weights,
${C = \{ c_1, \dotsc, c_r \} \subseteq \Reals}$ is a codebook, and~${Q = (q_1, \dotsc, q_k), q_i \in \{1, \dotsc, r\}}$ denotes
the quantized values. Most state-of-the-art compression schemes can be formalized in this manner
\citep{Song:Huizi:Dally:2016}.

Given such a triplet, we define the corresponding weight $w(S, Q, C) \in \Reals^p$ as:
\begin{equation}
    w_i(S, Q, C) = \begin{cases}
        c_{q_j} & \text{ if } i = s_j, \\
        0 & \text{ otherwise.}
    \end{cases}
\end{equation}
Following \citet{Langford:Caruana:2002, Dziugaite:Roy:2017},
we bound the generalization error of a stochastic estimator given by applying independent
random normal noise to the non-zero weights of the network. Formally, we consider the (degenerate) multivariate
normal centered at~$w$:~${\rho \sim \mathcal{N}(w, \sigma^2 J)}$, with~$J$ being a diagonal
matrix such that~${J_{ii} = 1}$ if~${i \in S}$ and~${J_{ii} = 0}$ otherwise.

\begin{theorem}
\label{thm:main_bound}
Let $(S, C, Q)$ be the output of a compression scheme, and let $\rho_{S, C, Q}$ be the stochastic estimator
given by the weights decoded from the triplet and variance $\sigma^2$. Let $c$ denote some arbitrary (fixed) coding
scheme and let $m$ denote an arbitrary distribution on the positive integers.
Then, for any $\tau > 0$, there is some PAC-Bayes prior $\pi$ such that:
\begin{equation}
    \begin{gathered}
\KL(\rho_{S, C, Q}, \pi) \leq (k \ceil{\log r} + \clen{S} + \clen{C}) \log 2 -\log m(k \ceil{\log r} + \clen{S} + \clen{C}) \\
    \quad + \sum_{i=1}^{k}\KL\Bigl(\normDist(c_{q_i}, \sigma^2), \sum_{j=1}^r \normDist(c_j, \tau^2)\Bigr).
    \end{gathered}
\end{equation}

\end{theorem}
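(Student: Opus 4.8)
The plan is to lift the union-bound construction behind \cref{thm:simple_bound} from point masses to (degenerate) Gaussians, with the extra device of \emph{averaging the prior over the quantization assignment}. The engine is a standard domination lemma: if $\pi \ge \gamma\,\mu$ as measures for a constant $\gamma>0$ and a possibly unnormalized measure $\mu$ with $\rho\ll\mu$, then $\frac{d\rho}{d\pi}\le\gamma^{-1}\frac{d\rho}{d\mu}$ holds $\rho$-a.e., and taking logs and integrating against $\rho$ gives $\KL(\rho,\pi)\le -\log\gamma + \int\log\frac{d\rho}{d\mu}\,d\rho$. The entire argument then reduces to designing a single data-independent prior $\pi$ and afterwards choosing $\gamma$ and $\mu$ so as to peel off exactly the two terms in the statement.

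First I would build $\pi$ as a mixture over \emph{all} admissible triplets $(S',C',Q')$, which is legitimate since this family does not depend on the observed output. To a triplet of total code length $L' = k'\ceil{\log r'} + \clen{S'} + \clen{C'}$ I assign weight $\beta_{(S',C',Q')} = m(L')2^{-L'}/Z$, and I let its component be the product measure placing $\delta_0$ on every coordinate outside $S'$ and an independent $\normDist(c'_{q'_j},\tau^2)$ on each $s'_j\in S'$. A Kraft-type argument, identical in spirit to the one used for \cref{thm:simple_bound} (injectivity of the encoding forces at most $2^\ell$ triplets to share any length $\ell$), shows $Z = \sum_\ell m(\ell)\,N_\ell\,2^{-\ell} \le \sum_\ell m(\ell) = 1$, so $\log Z\le 0$ can be discarded at the end.

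The crux — which I expect to be the only genuinely delicate step — is the choice of $\mu$. Dominating $\pi$ by a single component would leave a lone Gaussian $\normDist(c_{q_i},\tau^2)$ in the divergence, not the mixture $\sum_j\normDist(c_j,\tau^2)$ that the theorem demands. Instead I would restrict $\pi$ to the sub-collection of components sharing the true support $S$ and codebook $C$ but ranging over all $r^k$ assignments $Q'\in\{1,\dots,r\}^k$. Every such triplet has the \emph{same} length $L = k\ceil{\log r} + \clen{S} + \clen{C}$, since the quantization indices are coded at fixed width $\ceil{\log r}$ regardless of their values, hence the same weight $\beta := m(L)2^{-L}/Z$. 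Summing the components and factoring the sum coordinatewise,
\begin{equation}
    \sum_{Q' \in \{1,\dots,r\}^k} \prod_{i \notin S}\delta_0 \prod_{j=1}^{k} \normDist(c_{q'_j}, \tau^2)
    = \prod_{i \notin S}\delta_0 \prod_{j=1}^{k}\Bigl(\textstyle\sum_{l=1}^{r}\normDist(c_l, \tau^2)\Bigr) =: \mu,
\end{equation}
so that $\pi \ge \beta\,\mu$ as measures. This one manoeuvre simultaneously pays the $k\ceil{\log r}$ bits for the quantization (through $L$ inside $\beta$) and turns the per-coordinate prior into the unnormalized Gaussian mixture appearing in the bound.

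Finally I would invoke the domination lemma with $\gamma=\beta$ and this $\mu$. Because $\rho_{S,C,Q}$ and $\mu$ agree exactly on the degenerate directions — both put $\delta_0$ on coordinates outside $S$ — those directions contribute nothing, and the Radon–Nikodym derivative factors over the coordinates of $S$, yielding $\int\log\frac{d\rho}{d\mu}\,d\rho = \sum_{i=1}^{k}\KL\bigl(\normDist(c_{q_i},\sigma^2), \sum_{j=1}^{r}\normDist(c_j,\tau^2)\bigr)$. Combining this with $-\log\beta = L\log 2 - \log m(L) + \log Z$ and dropping $\log Z\le 0$ reproduces the claimed inequality. The two points needing care are the bookkeeping for the degenerate normal (resolved by the matching $\delta_0$ supports) and verifying that the fixed-width coding of $Q$ keeps $L$ constant across the summed components, which is precisely what legitimizes pulling out the common factor $\beta$.
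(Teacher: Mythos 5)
Your proposal is correct and follows essentially the same route as the paper's proof: a code-length-weighted mixture prior over all triplets with a Kraft-inequality normalization bound, restriction to the sub-mixture over all quantization assignments $Q'$ sharing the true $(S,C)$ (all of equal weight by the fixed-width coding of $Q$), and coordinatewise factorization of the resulting Gaussian mixture to decompose the KL term. Your explicit domination lemma and the careful matching of $\delta_0$ factors on the off-support coordinates are just a more pedantic rendering of the paper's chain of inequalities with unnormalized measures.
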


Note that we have written the KL-divergence of a distribution with a unnormalized measure (the last term), and in particular
this term may (and often will) be negative.
We defer the construction of the prior $\pi$ and the proof of \cref{thm:main_bound} to the supplementary material.

\begin{remark}
    We may obtain the first term $k \lceil \log r \rceil + \clen{S} + \clen{C}$ from the simple Occam's bound
    described in \cref{thm:simple_bound} by choosing the coding of the quantized values $Q$ as a simple array
    of integers of the correct bit length. The second term thus describes the adjustment (or number of bits we ``gain back'')
    from considering neighbouring estimators.
\end{remark}

\section{Generalization bounds in practice}
\label{sec:experiments}

In this section we present examples combining our theoretical arguments with state-of-the-art neural network
compression schemes.\footnotemark
\footnotetext{
Code to reproduce the experiments is available in the supplementary material.
}
Recall that almost all other approaches to bounding generalization error of deep neural networks yield vacuous bounds
for realistic problems. The one exception is \citet{Dziugaite:Roy:2017}, which succeeds by retraining the network in
order to optimize the generalization bound. We give two examples applying our generalization bounds to the models
output by modern neural net compression schemes. In contrast to earlier results, this leads immediately to
non-vacuous bounds on realistic problems. The strength of the Occam bound provides evidence that the connection
between compressibility and generalization has substantive explanatory power.

We report $95\%$ confidence bounds based on the measured effective compressed size of the networks. 
The bounds are achieved by combining the PAC-Bayes bound \cref{thm:pac-bayes} with \cref{thm:main_bound}, 
showing that $\KL(\rho,\pi)$ is bounded by the ``effective compressed size''.
We note a small technical modification: we choose the prior variance $\tau^2$ layerwise
by a grid search, this adds a negligible contribution to the effective size
(see \cref{sec:bound-details} for the technical details of the bound).

\paragraph{LeNet-5 on MNIST.}

Our first experiment is performed on the MNIST dataset, a dataset of 60k grayscale images
of handwritten digits. We fit the LeNet-5 \citep{LeCun:Bottou:Bengio:Haffner:1998} network, one of the first convolutional networks.
LeNet-5 has two convolutional layers and two fully connected layers, for a total of 431k parameters.

We apply a pruning and quantization strategy similar to that described in \citet{Song:Huizi:Dally:2016}.
We prune the network using Dynamic Network Surgery \citep{Guo:Yao:Chen:2016}, 
pruning all but $1.5\%$ of the network weights.
We then quantize the non-zero weights using a codebook with 4 bits.
The location of the non-zero coordinates are stored in compressed sparse row format, with the index
differences encoded using arithmetic compression.

We consider the stochastic classifier given by adding Gaussian noise
to each non-zero coordinate before each forward pass.
We add Gaussian noise with standard deviation equal to $5\%$ of the difference between the largest
and smallest weight in the filter.
This results in a negligible drop in classification performance.%
We obtain a bound on the training error of $46\%$ (with $95\%$ confidence).
The effective size of the compressed model is measured to be \SI{6.23}{\kibi\byte}.

\paragraph{ImageNet.}

The ImageNet dataset \citep{ImageNet} is a dataset of about 1.2 million natural images, categorized into 1000 different classes. 
ImageNet is substantially more complex than the MNIST dataset, and classical architectures are correspondingly more complicated.
For example, AlexNet \citep{Krizhevsky:Sutskever:Hinton:2012} and VGG-16 \citep{Simonoyan:Zisserman:2014} contain 61 and 128 million parameters, respectively.
Non-vacuous bounds for such models are still out of reach when applying our bound with current compression techniques.
However, motivated by computational restrictions,
there has been extensive interest in designing more parsimonious architectures that
achieve comparable or better performance with significantly fewer parameters \citep{Iandola:Han:Moskewicz:Ashraf:Dally:Keutzer:2016, Howard:2017:MobileNets, Zhang:Zhou:Lin:Sun:2017}.
By combining neural net compression schemes with parsimonious models of this kind,
we demonstrate a non-vacuous bounds on models with better performance than AlexNet.

Our simple Occam bound requires only minimal assumptions, and can be directly applied
to existing compressed networks. For example, \citet{Iandola:Han:Moskewicz:Ashraf:Dally:Keutzer:2016} introduce the SqueezeNet architecture, and explicitly study
its compressibility. They obtain a model with better performance than AlexNet but that can be written in \SI{0.47}{\mebi\byte}. A direct
application of our na\"ive Occam bound yields non-vacuous bound on the test error of $98.6\%$ (with $95\%$ confidence).
To apply our stronger bound---taking into account the noise robustness---we train and compress a network from scratch. 
We consider Mobilenet 0.5 \citep{Howard:2017:MobileNets}, which in its uncompressed form has better performance
and smaller size than SqueezeNet \citep{Iandola:Han:Moskewicz:Ashraf:Dally:Keutzer:2016}.

\citet{Zhu:Gupta:2017} study pruning of MobileNet in the context of energy-efficient inference in resource-constrained environments.
We use their pruning scheme with some small adjustments.
In particular, we use Dynamic Network Surgery \citep{Guo:Yao:Chen:2016} as our pruning method but follow a similar schedule.
We prune \SI{67}{\percent} of the total parameters. The pruned model achieves a validation accuracy of \SI{60}{\percent}.
We quantize the weights using a codebook strategy \citep{Song:Huizi:Dally:2016}.
We consider the stochastic classifier given by adding Gaussian noise to the non-zero weights, with the variance set in each
layer so as not to degrade our prediction performance. 
For simplicity, we ignore biases and batch normalization parameters in our bound, as they represent
a negligible fraction of the parameters. 
We consider top-1 accuracy (whether the most probable guess is correct) and top-5 accuracy (whether any of the 5 most probable guesses is correct).
\begin{table}[htbp]
    \caption{Summary of bounds obtained from compression\label{table:summary-bounds}}
    \begin{tabular}{lrrrrcc}
        \multirow{2}{*}{Dataset} & \multirow{2}{*}{Orig. size} & \multirow{2}{*}{Comp. size} & \multirow{2}{*}{Robust. Adj.} & \multirow{2}{*}{Eff. Size} & \multicolumn{2}{c}{Error Bound} \\
        &&&&& Top 1 & Top 5 \\
        \midrule
        MNIST & \SI{168.4}{\kibi\byte} & \SI{8.1}{\kibi\byte} & \SI{1.88}{\kibi\byte} & \SI{6.23}{\kibi\byte} & $<$ \SI{46}{\percent} & NA \\
        ImageNet & \SI{5.93}{\mebi\byte} & \SI{452}{\kibi\byte} & \SI{102}{\kibi\byte} & \SI{350}{\kibi\byte} & $<$ \SI{96.5}{\percent} & $<$ \SI{89}{\percent}
    \end{tabular}
\end{table}
The final ``effective compressed size'' is \SI{350}{\kibi\byte}. 
The stochastic network has a top-1 accuracy of \SI{65}{\percent} on the training data, and top-5 accuracy of~\SI{87}{\percent} on the training data. 
The small effective compressed size and high training data accuracy yield non-vacuous bounds for top-1 and top-5 test error. See \cref{sec:exp_details} for the details of the experiment.

\section{Limits on compressibility}
\label{sec:overfitting_limits_compression}

We have shown that compression results directly imply generalization bounds,
and that these may be applied effectively to obtain non-vacuous bounds on neural networks. In
this section, we provide a complementary view: overfitting implies a limit on compressibility. 

\paragraph{Theory.}
We first prove that the entropy of estimators that tend to overfit is bounded in terms
of the expected degree of overfitting. That implies the estimators fail to compress
on average.
As previously, consider a sample $S_n = \{ (x_1, y_1), \dotsc, (x_n, y_n) \}$ sampled i.i.d. from
some distribution $\dataGen$, and an estimator (or selection procedure) $\hat{h}$, which
we consider as a (random) function of the training data. 
The key observation is:
\begin{align*}
    \P(L(\hat{h}(x), y) = 1 \mid (x, y) \in S_n) = \EE(\hat{L}(\hat{h})), \\
    \P(L(\hat{h}(x), y) = 1 \mid (x, y) \notin S_n) =\EE( L(\hat{h})).
\end{align*}
That is, the probability of misclassifying an example in the training data is
smaller than the probability of misclassifying a fresh example,
and the expected strength of this difference is determined by the expected degree of overfitting.
By Bayes' rule, we thus see that the more $\hat{h}$ overfits, the better it is able to distinguish a sample from the training
and testing set. Such an estimator $\hat{h}$ must thus ``remember'' a significant portion of the training data
set, and its entropy is thus lower bounded by the entropy of its ``memory''.\nolinebreak
\begin{theorem}
  Let $L$, $\hat{L}$, and $\hat{h}$ be as in the text immediately preceeding the theorem. 
  For simplicity, assume that both the sample space $\domSpace \times \labSpace$
    and the hypothesis set $\hSpace$ are discrete. Then,
    \begin{equation}
        H(\hat{h}) \geq n g(\EE[\hat{L}(\hat{h})], \EE[L(\hat{h})]),
    \end{equation}
    where $g$ denotes some non-negative function (given explicitly in the proof).
\end{theorem}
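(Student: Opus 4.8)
The plan is to lower-bound the Shannon entropy of $\hat h$ by the information it shares with the training sample, and then argue that overfitting forces this shared information to grow linearly in $n$. Write $\hat q = \EE[\hat L(\hat h)]$ and $q = \EE[L(\hat h)]$ for the expected training and test errors. Since $\hat h$ is a (possibly randomized) function of $S_n$, the discreteness assumption gives $H(\hat h) \geq I(\hat h; S_n)$, where $I$ denotes mutual information and equality holds when $\hat h$ is a deterministic function of the data. It therefore suffices to bound $I(\hat h; S_n)$ from below by $n\, g(\hat q, q)$.

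First I would tensorize over the $n$ i.i.d. examples. Because $(X_1,Y_1),\dots,(X_n,Y_n)$ are mutually independent, writing $I(\hat h; S_n) = H(S_n) - H(S_n \mid \hat h)$, using independence to split $H(S_n) = \sum_i H((X_i,Y_i))$, and applying subadditivity of conditional entropy to the second term yields the superadditivity bound $I(\hat h; S_n) \geq \sum_{i=1}^n I(\hat h; (X_i,Y_i))$. This reduces the problem to lower-bounding the information $\hat h$ retains about a single training point.

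The crux---and the step I expect to be the main obstacle---is to convert each single-point mutual information into a quantity depending only on the error rates, via the data-processing inequality for relative entropy and the two identities stated before the theorem. By definition $I(\hat h; (X_i,Y_i)) = \KL\bigl(P_{\mathrm{joint}}, P_{\hat h}\otimes P_{(X_i,Y_i)}\bigr)$, where $P_{\mathrm{joint}}$ is the true joint law and the product law decouples the example from the classifier. Consider the binary statistic $b_i = L(\hat h(X_i), Y_i)$, a deterministic function of the pair $(\hat h, (X_i, Y_i))$. Under $P_{\mathrm{joint}}$ the example is a genuine training point, so $b_i \sim \bern(\hat q_i)$ with $\hat q_i = \EE[L(\hat h(X_i),Y_i)]$; under the product law the example is independent of $\hat h$ and hence behaves like a fresh draw, so $b_i \sim \bern(q)$ by the second identity. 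Pushing both laws through the map $(\hat h,(x,y)) \mapsto L(\hat h(x),y)$ and invoking data processing gives $I(\hat h;(X_i,Y_i)) \geq \KL\bigl(\bern(\hat q_i), \bern(q)\bigr)$.

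It remains to assemble the pieces. Since $\tfrac1n\sum_i \hat q_i = \EE[\hat L(\hat h)] = \hat q$ and the binary relative entropy $t \mapsto \KL(\bern(t),\bern(q))$ is convex, Jensen's inequality gives $\sum_{i=1}^n \KL(\bern(\hat q_i),\bern(q)) \geq n\,\KL(\bern(\hat q),\bern(q))$. Combining the three bounds identifies
\[
    g(\hat q, q) = \KL\bigl(\bern(\hat q),\bern(q)\bigr) = \hat q\log\frac{\hat q}{q} + (1-\hat q)\log\frac{1-\hat q}{1-q},
\]
which is non-negative by Gibbs' inequality and strictly positive exactly when the network overfits ($\hat q \neq q$). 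The delicate point throughout is the claim that the product-of-marginals law reproduces the \emph{test} error $q$: this is precisely the second displayed identity preceding the theorem, valid because a fresh example is independent of $\hat h$, and it is what allows the abstract mutual information to collapse onto a one-dimensional binary divergence.
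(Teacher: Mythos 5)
Your proof is correct, but it follows a genuinely different route from the paper's. The paper sets up an explicit coupling: it draws $n$ \emph{pairs} of i.i.d.\ examples $E=((s_i^0,s_i^1))_i$ and i.i.d.\ selector bits $B=(B_i)_i$ that determine which member of each pair enters the training set, and then lower-bounds $H(\hat h)\geq H(\hat h\mid E)= H(B\mid E)-H(B\mid E,\hat h)$, controlling $H(B_i\mid L(\hat h(x_i^0),y_i^0))$ via the Bayes-rule identities. This yields $g=\log 2 - l_n h_b(p_n)-(1-l_n)h_b(q_n)$, which is a Jensen--Shannon-type divergence between $\bern(\EE[\hat L(\hat h)])$ and $\bern(\EE[L(\hat h)])$ and is capped at $\log 2$ per example; it also leans on the assumption that $\hat h$ depends only on the unordered sample so that the per-index law is identical. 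You instead go through $H(\hat h)\geq I(\hat h;S_n)\geq \sum_i I(\hat h;(X_i,Y_i))$ (superadditivity of mutual information over independent inputs), collapse each term by data processing through the binary statistic $L(\hat h(X_i),Y_i)$ to get $\KL(\bern(\hat q_i),\bern(q))$, and finish with convexity of the binary divergence, obtaining $g=\KL(\bern(\hat q),\bern(q))$. Your route avoids the exchangeability assumption entirely (Jensen absorbs any index dependence of $\hat q_i$), and your $g$ is unbounded and can exceed $\log 2$ per example when the test error is extreme, so in some regimes it is quantitatively stronger than the paper's; both functions vanish exactly when $\hat q=q$ and are non-negative by Gibbs' inequality. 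One cosmetic caveat: you justify superadditivity by writing $I(\hat h;S_n)=H(S_n)-H(S_n\mid\hat h)$ and splitting the marginal entropy, which is an $\infty-\infty$ expression if the (countable) sample space has infinite entropy; the same inequality follows cleanly from the chain rule $I(\hat h;S_n)=\sum_i I(\hat h;(X_i,Y_i)\mid (X,Y)_{<i})$ together with independence of the examples, so this is easily repaired and does not affect the result.
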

We defer the proof to the supplementary material.

\paragraph{Experiments.}
We now study this effect empirically. 
The basic tool is the randomization test of \citet{Zhang:Bengio:Hardt:Recht:Vinyals:2017}:
we consider a fixed architecture and a number of datasets produced by 
randomly relabeling the categories of some fraction of examples from a real-world dataset.
\begin{wrapfigure}[17]{r}{.6\textwidth}
    \resizebox{.6\textwidth}{!}{
    \begin{tikzpicture}[font=\fontfamily{phv}\fontsize{7}{0}\selectfont]
      \node at (0,0) {\includegraphics[width=7cm]{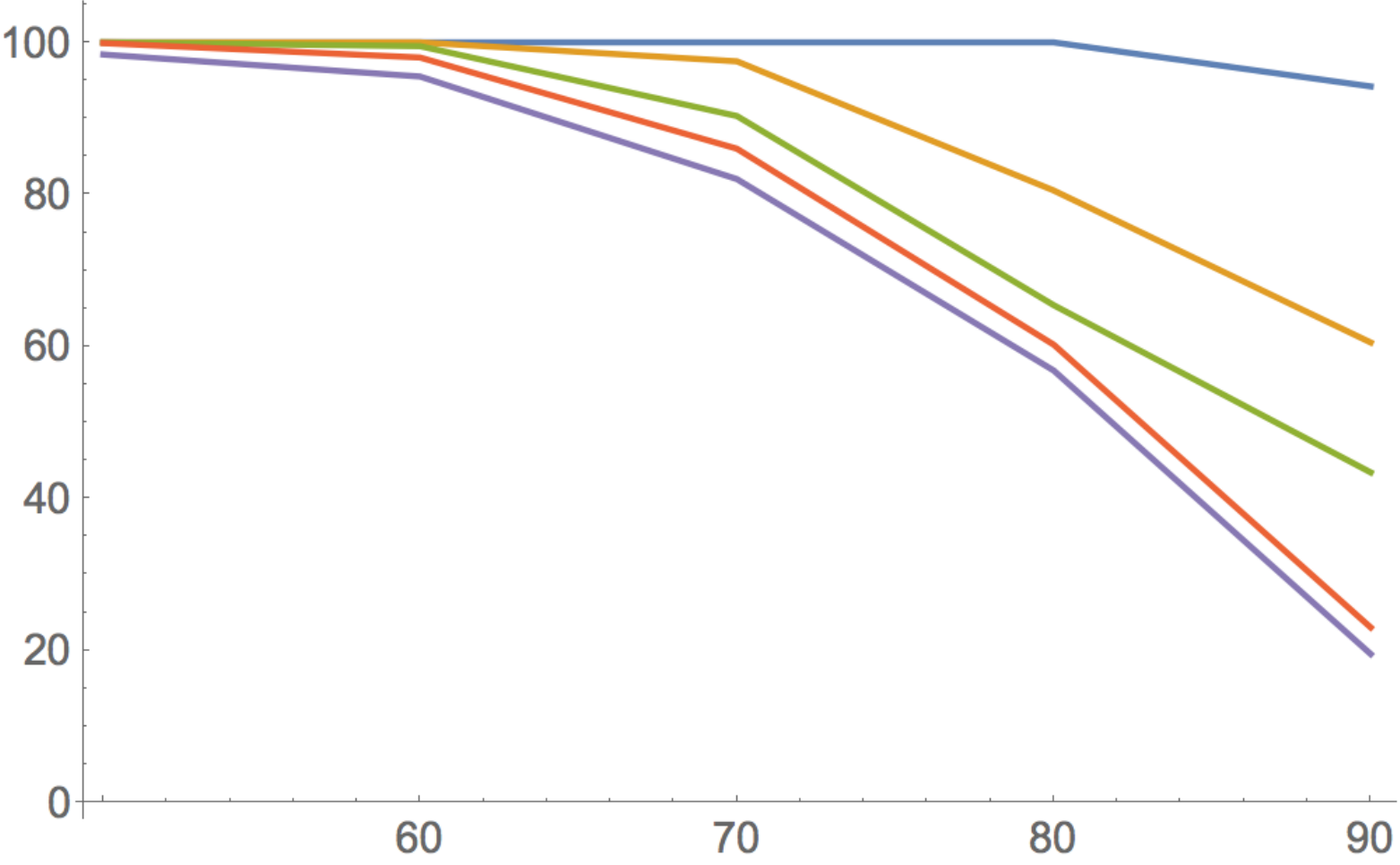}};
      \node[gray] at (3.7,1.75) {$0\%$};
      \node[gray] at (3.7,.45) {$30\%$};
      \node[gray] at (3.7,-.2) {$50\%$};
      \node[gray] at (3.7,-.9) {$70\%$};
      \node[gray] at (3.7,-1.2) {$100\%$};
      \node[gray,rotate=90] at (-3.8,0) {accuracy (train)};
      \node[gray] at (0,-2.4) {sparsity};
    \end{tikzpicture}
    }
    \caption{Training performance after pruning at varying levels of label randomization.
    \label{fig:progressive-randomization}}
\end{wrapfigure}
If the model has sufficiently high capacity, it can be fit with approximately zero training loss on each dataset.
In this case, the generalization error is given by the fraction of examples that have been randomly relabeled. 
We apply a standard neural net compression tool to each of the trained models, and we observe that
the models with worse generalization require more bits to describe in practice.

For simplicity, we consider the CIFAR-10 dataset, a collection of 40000 images categorized into 10 classes.
We fit the ResNet \citep{He:Zhang:Ren:Sun:2017} architecture with 56 layers with no pre-processing and no penalization
on the CIFAR-10 dataset where the labels are subjected to varying levels of randomization. As noted in
\citet{Zhang:Bengio:Hardt:Recht:Vinyals:2017}, the network is able to achieve \SI{100}{\percent} training
accuracy no matter the level of randomization.

We then compress the networks fitted on each level of label randomization by pruning to a given target sparsity.
Surprisingly, all networks are able to achieve \SI{50}{\percent} sparsity with essentially no loss of training accuracy,
even on completely random labels. However, we observe that as the compression level increases further, 
the scenarios with more randomization exhibit a faster decay in training accuracy, see \cref{fig:progressive-randomization}.
This is consistent with the fact that network size controls generalization error.

\section{Discussion}

It has been a long standing observation by practitioners that despite the large capacity
of models used in deep learning practice, empirical results demonstrate good generalization
performance. We show that with no modifications, a standard engineering pipeline of training
and compressing a network leads to demonstrable and non-vacuous generalization guarantees.
These are the first such results on networks and problems at a practical scale, and mirror
the experience of practitioners that best results are often achieved without heavy regularization
or modifications to the optimizer \citep{Wilson:Roelofs:Stern:Srebro:Recht:2017}.

The connection between compression and generalization raises a number of important questions.
Foremost, what are its limitations? 
The fact that our bounds are non-vacuous implies the link between compression and generalization is non-trivial.
However, the bounds are far from tight. 
If significantly better compression rates were achievable,
the resulting bounds would even be of practical value.
For example, if a network trained on ImageNet to  $90\%$ training and
$70\%$ testing accuracy could be compressed to an effective size of \SI{30}{\kibi\byte}---about one order
of magnitude smaller than our current compression---that would
yield a sharp bound on the generalization error.

\section{Acknowledgements}
We acknowledge computing resources from Columbia University's Shared Research Computing Facility project, which is
supported by NIH Research Facility Improvement Grant 1G20RR030893-01, and associated funds from the New York State
Empire State Development, Division of Science Technology and Innovation (NYSTAR) Contract C090171, both awarded April
15, 2010

\bibliographystyle{iclr2019_conference}
\bibliography{nn_ref}

\clearpage

\appendix

\section{Proof of \cref{thm:main_bound}}

In this section we describe the construction of the prior $\pi$ and prove the bound on the KL-divergence claimed
in \cref{thm:main_bound}. Intuitively, we would like to express our prior as a mixture over all possible decoded
points of the compression algorithm. More precisely, define the mixture component $\pi_{S, Q, C}$ associated with
a triplet $(S, Q, C)$ as:
\begin{equation}
    \pi_{S, Q, C} = \normDist(w(S, Q, C), \tau^2).
\end{equation}
We then define our prior $\pi$ as a weighted mixture over all triplets, weighted by the code length of the triplet:
\begin{equation}
    \pi \propto \sum_{S, Q, C} m(\clen{S} + \clen{C} + k \ceil{\log r}) 2^{-\clen{S} - \clen{C} - k \ceil{\log r}} \pi_{S, Q, C},
\end{equation}
where the sum is taken over all $S$ and $C$ which are representable by our code, and all $Q = (q_1, \dotsc, q_k) \in \{1, \dotsc, r\}^k$.
In practice, $S$ takes values in all possible subsets of $\{1, \dotsc, p\}$, and $C$ takes values in $F^r$, where $F \subseteq \Reals$
is a chosen finite subset of representable real numbers (such as those that may be represented by IEEE-754 single precision numbers),
and $r$ is a chosen quantization level. We now give the proof of \cref{thm:main_bound}.

\begin{proof}
    We have that:
    \begin{equation}
        \pi = \frac{1}{Z} \sum_{S, Q, C} m(\clen{S} + \clen{C} + k \ceil{\log r}) 2^{-\clen{S} - \clen{C} - k \ceil{\log r}} \pi_{S, Q, C},
    \end{equation}
    where we must have $Z \leq 1$ by the same argument as in the proof of \cref{thm:simple_bound}

    Suppose that the output of our compression algorithm is a triplet $(\hat{S}, \hat{Q}, \hat{C})$.
    We recall that our posterior $\rho$ is given by a normal centered at $w(\hat{S}, \hat{Q}, \hat{C})$ with variance $\sigma^2$,
    and we may thus compute the KL-divergence:
    \begin{align}
        \KL(\rho, \pi)
        &\leq \KL \Big(\rho, \sum_{S, Q, C} m(\clen{S} + \clen{C} + k\ceil{\log r}) 2^{-\clen{S} - \clen{C} - k \ceil{\log r}} \pi_{S, Q, C}\Big) \nonumber \\
        &\leq \KL \Big(\rho, \sum_{Q} m(\clen{\hat{S}} + \clen{\hat{C}} + \hat{k}\ceil{\log \hat{r}}) 2^{-\clen{\hat{S}} - \clen{\hat{C}} - \hat{k} \ceil{\log \hat{r}}} \pi_{\hat{S}, Q, \hat{C}}\Big) \nonumber \\
        &\leq \big(\clen{\hat{S}} + \clen{\hat{C}} + \hat{k} \ceil{\log \hat{r}}\big) \log 2 + \log m(\clen{\hat{S}} + \clen{\hat{C}} + \hat{k} \ceil{\log \hat{r}})
            + \KL\Big(\rho, \sum_{Q} \pi_{\hat{S}, Q, \hat{C}}\Big). \label{eq:main_bound_proof}
    \end{align}

    We are now left with the mixture term, which is a mixture of $r^k$ many terms in dimension $k$, and thus computationally
    untractable. However, we note that we are in a special case where the mixture itself is independent across coordinates.
    Indeed, let $\phi_\tau$ denote the density of the univariate normal distribution with mean 0 and variance $\tau^2$,
    we note that we may write the mixture as:
    \begin{align*}
        \Big(\sum_Q \pi_{\hat{S}, Q, \hat{C}}\Big)(x)
        &= \sum_{q^1, \dotsc, q^k = 1}^r \prod_{i = 1}^k \phi_\tau(x_i - \hat{c}_{q^i}) \\
        &= \prod_{i = 1}^k \sum_{q^i = 1}^r \phi_\tau (x_i - \hat{c}_{q^i}).
    \end{align*}
    Additionally, as our chosen stochastic estimator $\rho$ is independent over the coordinates, the KL-divergence
    decomposes over the coordinates, to obtain:
    \begin{equation}
        \KL\Big(\rho, {\textstyle \sum_Q} \pi_{\hat{S}, Q, \hat{C}}\Big)
        = \sum_{i = 1}^k \KL\Big(\rho_i, {\textstyle\sum_{q^i = 1}^r} \normDist(\hat{c}_{q^i}, \tau^2)\Big).
    \end{equation}
    Plugging the above computation into \eqref{eq:main_bound_proof} gives the desired result.
\end{proof}

\subsection{Details in Practical Uses of the Bound}
\label{sec:bound-details}
Although \cref{thm:main_bound} contains the main mathematical contents of our bound, applying the bound
in a fully correct fashion requires some amount of minutiae and book-keeping we detail in this section.
In particular, we are required to select a number of parameters (such as the prior variances).
We extend the bound to account for such unrestricted (and possibly data-dependent) parameter selection.
Typically, such adjustments have a negligible effect on the computed bounds. 

\begin{theorem}[Union Bound for Discrete Parameters]
Let $\pi_\xi$, $\xi \in \Xi$, denote a family of priors parameterized by a discrete parameter $\xi$,
which takes values in a finite set $\Xi$. There exists a prior $\pi$ such that for any posterior $\rho$
and any $\xi \in \Xi$:
\begin{equation}
    \KL(\rho, \pi) \leq \KL(\rho, \pi_\xi) + \log \abs{\Xi}.
\end{equation}
\end{theorem}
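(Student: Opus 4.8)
The plan is to construct $\pi$ as the uniform mixture of the family, namely $\pi = \frac{1}{\abs{\Xi}}\sum_{\xi' \in \Xi}\pi_{\xi'}$, which is a valid probability measure because it is a finite convex combination of probability measures. The defining feature of this construction is that discarding all but one term of the sum can only shrink the measure: as (nonnegative) measures one has $\pi \geq \frac{1}{\abs{\Xi}}\pi_\xi$ for each fixed $\xi$. This single domination inequality is the entire engine of the proof, and it mirrors the $Z \leq 1$ argument already used in \cref{thm:simple_bound}.

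First I would fix $\xi \in \Xi$ and dispose of the degenerate case: if $\rho$ is not absolutely continuous with respect to $\pi_\xi$ then $\KL(\rho,\pi_\xi) = +\infty$ and the claimed inequality holds vacuously, so I may assume $\rho \ll \pi_\xi$. Since $\pi \geq \frac{1}{\abs{\Xi}}\pi_\xi$ forces $\pi_\xi \ll \pi$ with $\frac{d\pi_\xi}{d\pi} \leq \abs{\Xi}$ wherever that derivative is defined, and since this in turn gives $\rho \ll \pi$, the chain rule for Radon--Nikodym derivatives yields, $\rho$-almost everywhere,
\begin{equation}
    \frac{d\rho}{d\pi} = \frac{d\rho}{d\pi_\xi}\,\frac{d\pi_\xi}{d\pi} \leq \abs{\Xi}\,\frac{d\rho}{d\pi_\xi}.
\end{equation}

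Second, I would take logarithms and integrate against $\rho$. Using $\log \frac{d\rho}{d\pi} \leq \log\abs{\Xi} + \log\frac{d\rho}{d\pi_\xi}$ pointwise together with the definition $\KL(\rho,\pi) = \int \log\frac{d\rho}{d\pi}\,d\rho$, one obtains
\begin{equation}
    \KL(\rho,\pi) \leq \log\abs{\Xi} + \int \log\frac{d\rho}{d\pi_\xi}\,d\rho = \log\abs{\Xi} + \KL(\rho,\pi_\xi),
\end{equation}
which is exactly the claim. Because $\pi$ was fixed before choosing $\xi$, the bound holds simultaneously for every $\xi \in \Xi$ and every posterior $\rho$, as required.

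I do not expect any genuine obstacle here: the statement is a one-line mixture (union-bound) argument, and the only thing demanding care is the measure-theoretic bookkeeping, namely justifying the Radon--Nikodym chain rule and handling the non-absolutely-continuous case, both of which are routine. In the fully discrete setting assumed elsewhere in the paper this collapses further, since each $\frac{d\rho}{d\pi}$ is merely a ratio of point masses and the inequality $\pi(h) \geq \frac{1}{\abs{\Xi}}\pi_\xi(h)$ can be substituted directly inside the sum defining $\KL(\rho,\pi)$.
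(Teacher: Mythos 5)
Your proposal is correct and follows essentially the same route as the paper: define $\pi$ as the uniform mixture $\frac{1}{\abs{\Xi}}\sum_{\xi'}\pi_{\xi'}$, observe that $\frac{d\rho}{d\pi} \leq \abs{\Xi}\,\frac{d\rho}{d\pi_\xi}$, and integrate the logarithm against $\rho$. The extra measure-theoretic bookkeeping you supply (the degenerate non-absolutely-continuous case and the Radon--Nikodym chain rule) is a harmless elaboration of the paper's one-line argument.
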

\begin{proof}
    We define $\pi$ as a uniform mixture of the $\pi_\xi$:
    \begin{equation}
        \pi = \frac{1}{\abs{\Xi}} \sum_{\xi \in \Xi} \pi_\xi.
    \end{equation}

    We then have that:
    \begin{equation}
        \KL(\rho, \pi) = \EE_{X \sim \rho} \log \frac{d\rho}{d\pi},
    \end{equation}
    but we can note that $\frac{d\rho}{d\pi} \leq \abs{\Xi} \frac{d\rho}{d\pi_\xi}$, from
    which we deduce that:
    \begin{equation}
        \KL(\rho, \pi) \leq \KL(\rho, \pi_\xi) + \log \abs{\Xi}.
    \end{equation}
\end{proof}

We make liberal use of this variant to control a number of discrete parameters which are
chosen empirically (such as the quantization resolution at each layer). We also use this bound
to control a number of continuous quantities (such as the prior variances) by discretizing these
quantities as IEEE-754 single precision (32 bit) floating point numbers.

\section{Experiment details}
\label{sec:exp_details}

Code used to run these experiments is available on github: \url{https://github.com/wendazhou/nnet-compression-generalization}.

\subsection{LeNet-5}
We train the baseline model for LeNet-5 using stochastic gradient descent with momentum and no data augmentation. The batch
size is set to 1024, and the learning rate is decayed using an inverse time decay starting at 0.01 and decaying every 125
steps. We apply a small $\ell_2$ penalty of $0.005$. We train a total of 20000 steps.

We carry out the pruning using Dynamic Network Surgery \citep{Guo:Yao:Chen:2016}. The threshold is selected per layer
as the mean of the layer coefficients offset by a constant multiple of the standard deviation of the coefficients,
where the multiple is piecewise constant starting at 0.0 and ending at 4.0. We choose the pruning probability as a
piecewise constant starting at 1.0 and decaying to $10^{-3}$. We train for 30000 steps using the ADAM optimizer.

We quantize all the weights using a 4 bit codebook \citep{Song:Huizi:Dally:2016} per layer initialized using $k$-means.
A single cluster
in each weight is given to be exactly zero and contains the pruned weights. The remaining clusters centers are learned
using the ADAM optimizer over 1000 steps.

\subsection{MobileNet}
MobileNets are a class of networks that make use of depthwise separable convolutions. Each layer is composed
of two convolutions, with one depthwise convolution and one pointwise convolution.
We use the pre-trained MobileNet model provided by Google as our baseline model. We then prune the pointwise
(and fully connected) layers only, using Dynamic Network Surgery. The threshold is set for each weight as a quantile
of the absolute values of the coordinates, which is increased according to the schedule given in \citep{Zhu:Gupta:2017}.
As the lower layers are smaller and more sensitive, we scale the target sparsity for each layer according to the size
of the layer. The target sparsity is scaled linearly between $65\%$ and $75\%$ as a proportion of the number of elements
in the layer compared to the largest layer (the final layer). We use stochastic gradient descent with momentum and
decay the learning with an inverse time decay schedule, starting at $10^{-3}$ and decaying by $0.05$ every $2000$ steps.
We use a minibatch size of $64$ and train for a total of $300000$ steps, but tune the pruning schedule so that the
target sparsity is reached after $200000$ steps.

We quantize the weights by using a codebook for each layer with $6$ bits for all layers except the last fully connected
layer which only has $5$ bits. The pointwise and fully connected codebooks have a reserved encoding for exact $0$, whereas
the non-pruned depthwise codebooks are fully learned. We initialize the cluster assignment using $k$-means and train
the cluster centers for $20000$ steps with stochastic gradient with momentum with a learning rate of $10^{-4}$. Note that
we also modify the batch normalization moving average parameters in this step so that it adapts faster, choosing
$.99$ as the momentum parameter for the moving averages.

To witness noise robustness, we only add noise to the pointwise and fully connected layer.
We are able to add Gaussian noise with standard deviation equal to $2\%$ of the difference in magnitude between the largest and smallest coordinate
in the layer for the fully connected layer. For pointwise layers we add noise equal to $1\%$ of the difference scaled linearly
by the relative size of the layer compared to the fully connected layer. These quantities were chosen
to minimally degrade the training performance while obtaining good improvements on the generalization bound: in
our case, we observe that the top-1 training accuracy is reduced to $65\%$ with noise applied from
$67\%$ without noise.

\section{Proof that overfitting implies high classifier entropy}
\label{sec:overfitting_high_ent_proof}
As previously, consider a sample $S = \{ (x_1, y_1), \dotsc, (x_n, y_n) \}$ sampled i.i.d. from
some distribution $\dataGen$, and an estimator (or selection procedure) $\hat{h}$. The statement
that $\hat{h}$ overfits may then be captured in terms of the training and testing error of $\hat{h}$,
namely that $\hat{L}(\hat{h}) \ll L(\hat{h})$. We note that this statement depends on the randomness
of the sample through its impact on $\hat{h}$, and we will make the interpretation precise momentarily.

Such an estimator $\hat{h}$ that overfits may be transformed into a procedure which discriminates
between samples from the training and testing set. Indeed, let $(x, y) \in \domSpace \times \labSpace$
be drawn from an independent mixture of the uniform distribution on $S$ and the data-generating distribution
$\dataGen$, where by independent we mean that $I_{(x,y)\in S}$ is independent of $S$ .
Then, we have by Bayes rule that:%
\begin{multline}\label{eq:overfit:bayes}
    \P((x, y) \in S\mid L(\hat{h}(x), y) = 1) = \\
    \frac{\P(L(\hat{h}(x), y) = 1 \mid  (x, y) \in S)}
    {\P(L(\hat{h}(x), y) = 1 \mid (x, y) \in S) + \P(L(\hat{h}(x), y) = 1 \mid (x, y) \notin S)},
\end{multline}
where the probability is taken with respect to the distribution of $(x, y)$. By the definition of in-sample
and out-of-sample loss, we have by independence that:
\begin{align*}
    \P(L(\hat{h}(x), y) = 1 \mid (x, y) \in S) = \EE(\hat{L}(\hat{h})), \\
    \P(L(\hat{h}(x), y) = 1 \mid (x, y) \notin S) =\EE( L(\hat{h})).
\end{align*}

We may thus rewrite \eqref{eq:overfit:bayes} (and its analogue conditional probability on the event
$L(\hat{h}(x), y) = 0$) to obtain:
\begin{align*}
    \P((x, y) \in S \mid L(\hat{h}(x), y) = 1)
    &= \left(1 + \frac{\EE(L(\hat{h}))}{\EE(\hat{L}(\hat{h}))} \right)^{-1}, \\
    \P((x, y) \in S \mid L(\hat{h}(x), y) = 0)
    &= \left(1 + \frac{1 - \EE( L(\hat{h}))}{1 - \EE( \hat{L}(\hat{h}))} \right)^{-1}.
\end{align*}
We thus see that the more $\hat{h}$ overfits, the better it is able to distinguish a sample from the training
and testing set. Such an estimator $\hat{h}$ must thus ``remember'' a significant portion of the training data
set, and its entropy is thus lower bounded by the entropy of its ``memory''. Quantitatively, we note that the
quality of $\hat{h}$ as a discriminator between the training and
testing set is captured by the quantities
\begin{equation*}
    p_n =  \Bigl(1 +\frac{ \EE(L(\hat{h}))}{\EE(\hat{L}(\hat{h}))}
    \Bigr)^{-1},
    \quad\;
    q_n = \Bigl(1 + \frac{1 - \EE( L(\hat{h}))}{1 - \EE(
      \hat{L}(\hat{h}))} \Bigr)^{-1},
    \quad\;
    l_n = \frac{1}{2} \EE[\hat{L}(\hat{h}) + L(\hat{h})].
\end{equation*}

We may interpret $p_n$ as the average proportion of false positives
and $q_n$ as the average proportion of true negatives when viewing $\hat{h}$ as a classifier. We prove
that if those quantities are substantially different from a random classifier, then $\hat{h}$ must have
high entropy. We formalize this statement and provide a proof below.
\begin{theorem}
    Let $S = \{ (x_1, y_1), \dotsc, (x_n, y_n) \}$ be sampled i.i.d. from some distribution $\dataGen$, and
    let $\hat{h}$ be a selection procedure, which is only a function of the unordered set $S$.
    Let us view $\hat{h}$ as a random quantity through the distribution
    induced by the sample $S$. For simplicity, we assume that both the sample space $\domSpace \times \labSpace$
    and the hypothesis set $\hSpace$ are discrete. We have that:
    \begin{equation}
      \label{eq:entropy:hat:h}
        H(\hat{h}) \geq n g(p_n, q_n, l_n),
    \end{equation}
    where $g$ denotes some non-negative function.
\end{theorem}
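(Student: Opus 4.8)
The plan is to interpret $\hat{h}$ as a (permutation-invariant) deterministic function of the ordered sample and to lower bound its Shannon entropy by the mutual information it shares with that sample, then to extract a factor of $n$ from the independence of the $n$ examples. First I would note that with $S = (Z_1, \dots, Z_n)$ and $Z_i = (x_i, y_i) \distiid \dataGen$, the estimator $\hat{h}$ is determined by $S$, so $H(\hat{h} \mid S) = 0$ and therefore $H(\hat{h}) = I(\hat{h}; S)$, the mutual information between the estimator and the full sample. Discreteness of $\hSpace$ and $\domSpace \times \labSpace$ guarantees all these quantities are well defined.

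Next, to produce the factor $n$, I would use independence of the coordinates together with subadditivity of conditional entropy: writing $I(\hat{h}; S) = \sum_i H(Z_i) - H(Z_1, \dots, Z_n \mid \hat{h})$ and bounding $H(Z_1, \dots, Z_n \mid \hat{h}) \le \sum_i H(Z_i \mid \hat{h})$ yields $H(\hat{h}) = I(\hat{h}; S) \ge \sum_{i=1}^n I(\hat{h}; Z_i)$. Since the $Z_i$ are i.i.d. and $\hat{h}$ is symmetric in them, each term coincides, so $H(\hat{h}) \ge n\, I(\hat{h}; Z_1)$.

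The crux is then to lower bound the single-coordinate information $I(\hat{h}; Z_1)$ purely in terms of the in- and out-of-sample error rates. Here I would write $I(\hat{h}; Z_1) = \KL(\mathrm{law}(\hat{h}, Z_1), \mathrm{law}(\hat{h}) \otimes \dataGen)$ and observe that the second measure is exactly the law of $(\hat{h}, Z')$ for a fresh $Z' \sim \dataGen$ drawn independently of the training set. Applying the data-processing inequality for the KL divergence to the map sending $(\hat{h}, z)$ to the loss value $L(\hat{h}(z)) \in \{0,1\}$ collapses both measures to Bernoulli laws: the joint pushes forward to $\bern(\EE[\hat{L}(\hat{h})])$ (a genuine training point, misclassified at the in-sample rate, using the symmetry identity for the unordered sample), while the product pushes forward to $\bern(\EE[L(\hat{h})])$ (a fresh point, misclassified at the out-of-sample rate). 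This gives
\[
I(\hat{h}; Z_1) \ge \KL\bigl(\bern(\EE[\hat{L}(\hat{h})]), \bern(\EE[L(\hat{h})])\bigr),
\]
the nonnegative binary KL divergence between the two error rates, whence $H(\hat{h}) \ge n\, \KL(\bern(\EE[\hat{L}(\hat{h})]), \bern(\EE[L(\hat{h})]))$.

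Finally, I would recover the stated form by inverting the definitions preceding the theorem: since $\EE[\hat{L}(\hat{h})] = 2 p_n l_n$ and $\EE[L(\hat{h})] = 2 l_n(1 - p_n)$, the displayed binary KL divergence is a nonnegative function $g(p_n, q_n, l_n)$ of the stated quantities, completing the argument. The main obstacle I anticipate is the single-coordinate step: the factor of $n$ requires the subadditivity bound to be applied in the correct direction, and the reduction to error rates hinges on selecting the loss-outcome statistic in the data-processing inequality and checking that its two pushforward Bernoulli parameters are precisely the in-sample and out-of-sample misclassification probabilities $\EE[\hat{L}(\hat{h})]$ and $\EE[L(\hat{h})]$ identified in the text.
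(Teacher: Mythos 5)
Your proof is correct, and it takes a genuinely different route from the paper's. You tensorize mutual information directly: since $\hat{h}$ is determined by $S$, $H(\hat{h}) = I(\hat{h}; S) \geq \sum_{i=1}^n I(\hat{h}; Z_i) = n\, I(\hat{h}; Z_1)$, using independence of the coordinates, subadditivity of conditional entropy, and exchangeability; you then apply the data-processing inequality to the loss-outcome statistic to obtain $I(\hat{h}; Z_1) \geq \KL\bigl(\bern(\EE[\hat{L}(\hat{h})]), \bern(\EE[L(\hat{h})])\bigr)$. Each step checks out, including the identity $\EE[L(\hat{h}(X_1),Y_1)] = \EE[\hat{L}(\hat{h})]$, which correctly relies on the hypothesis that $\hat{h}$ depends only on the unordered sample, and the recovery $\EE[\hat{L}(\hat{h})] = 2 p_n l_n$, $\EE[L(\hat{h})] = 2 l_n(1-p_n)$, which shows your lower bound is a non-negative function of $(p_n, q_n, l_n)$ as required. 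The paper instead builds an auxiliary coupling: $n$ i.i.d.\ pairs of candidate examples $E$ together with fair Bernoulli selector bits $B$ deciding which member of each pair enters the training set, and lower bounds $H(\hat{h})$ by $H(B \mid E) - H(B \mid E, \hat{h})$, i.e., by the information the trained model leaks about which examples it saw; the resulting per-example gain is $\log 2 - l_n h_b(p_n) - (1-l_n) h_b(q_n)$ with $h_b$ the binary entropy function, which is the mutual information between the selector bit and the loss outcome --- a Jensen--Shannon-type divergence between the in-sample and out-of-sample loss distributions, necessarily at most $\log 2$ per example. Your per-example gain is the asymmetric binary KL divergence, which is not capped at $\log 2$ and can be larger in the memorization regime (e.g., it equals $\log\frac{1}{1-\EE[L(\hat{h})]}$ when the training error is zero); the two bounds are not identical, but both vanish exactly when $\EE[\hat{L}(\hat{h})] = \EE[L(\hat{h})]$ and either one establishes the theorem as stated. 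Your argument is arguably cleaner in that it avoids the auxiliary coupling entirely; the paper's construction has the separate virtue of making the ``membership inference'' interpretation explicit.
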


\begin{proof}
    Consider a sequence of pairs $(s_1^0, s_1^1), \dotsc, (s_n^0, s_n^1)$,
    where each $s_i^j = (x_i^j, y_i^j)$ is sampled independently according to
    the data generating distribution $\dataGen$. Let $E = ((s_i^0, s_i^1))_{i
    = 1, \dotsc n}$ denote the of sample pairs. Additionally, let $b_1,
    \dotsc, b_n \in \{0, 1\}$ denote $n$ i.i.d. Bernoulli random variables,
    and let $B = (b_i)_{i = 1, \dotsc, n}$ denote the sequence.
    We may construct a sample $S$ by selecting elements of $E$ according to $B$:
    \begin{equation}
        S = ((x_i^{b_i}, y_i^{b_i}))_{i = 1, \dotsc, n},
    \end{equation}
    and we note that $S$ is an i.i.d. sample of size $n$ from the data generating distribution $\dataGen$.
    Additionally, by independence of $B$ and $E$, we have that $H(B \mid E) = H(B) = n \log 2$.
    On the other hand, we have
    \begin{align*}
        H(B \mid E, \hat{h})
        &\leq \sum_{i = 1}^n H(B_i \mid E, \hat{h}) \\
        &\leq \sum_{i = 1}^n H(B_i \mid \hat{h}(x_i^0), \hat{h}(x_i^1), y_i^0, y_i^1) \\
        &\leq \sum_{i = 1}^n H(B_i \mid L(\hat{h}(x_i^0), y_i^0)).
    \end{align*}
    We compute the conditional distribution of $B_i$ given $L(\hat{h}(x_i^0), y_i^0) = L^0_i$.
    In particular, we claim that $B_i \mid L^0_i= 0$ is Bernoulli with parameter $p_n$.
    Indeed, note that $B_i$, $L^0_i$ and $\hat{h}$ have the same distribution as if they were
    sampled from the procedure described before \eqref{eq:overfit:bayes}. Namely, sample $S$
    i.i.d. according to the data generating distribution, and let $\hat{h}$ be the corresponding
    estimator, $B_i$ an independent Bernoulli random variable, and $L_i = L(\hat{h}(x), y)$
    where $(x, y)$ is sampled uniformly from S if $B_i = 0$ and according to the data generating
    distribution if $B_i = 1$. Note that this distribution does not depend on $i$ due to the assumption
    that $\hat{h}$ is measurable with respect to the unordered sample $S$.
    By \eqref{eq:overfit:bayes}, we thus deduce that:%
    \begin{equation}
        \P(B_i = 0 \mid L_i^0 = 1) =p_n
    \end{equation}
    which yields the desired result by taking expectation over the distribution of $\hat{h}, \hat{L}(\hat{h})$.

    Similarly, we may compute the distribution of $B_i$ conditional on the event where $L^0_i = 0$,
    as $\P(B_i = 0 \mid L^0_i = 0) = q_n$. By definition, we now have that:
    \begin{equation}
        H(B_i \mid L_i^0) = l_n h_b(p_n) + (1 - l_n) h_b(q_n),
    \end{equation}
    where $h_b(p)$ denotes the binary entropy function.
    Finally, we apply the chain rule for entropy. We note that
    \begin{equation}
        H(B \mid E, \hat{h}) = H(B, \hat{h} \mid E) - H(\hat{h} \mid E),
    \end{equation}
    and write ${H(B, \hat{h} \mid E) \geq H(B \mid E) = n \log 2}$ and
    ${H(\hat{h} \mid E) \leq H(\hat{h})}$. In summary,
    \begin{align*}
        H(\hat{h})
        &\geq H(\hat{h} \mid E)  \\
        &= H(B, \hat{h} \mid E) - H(B \mid E, \hat{h})  \\
        &\geq n \log 2 - n [l_n h_b(p_n) - (1 - l_n) h_b(q_n)] \\
        &\geq n [h_b(1/2) - l_n h_b(p_n) - (1 - l_n) h_b(q_n)]\;,
    \end{align*}
    which yields \eqref{eq:entropy:hat:h}.
\end{proof}

\end{document}